\def\eqref#1{equation~\ref{#1}}
\def\1{\bm{1}}
\def\rvv{{\mathbf{v}}}
\def\rvx{{\mathbf{x}}}
\def\rmH{{\mathbf{H}}}
\def\rmZ{{\mathbf{Z}}}
\def\vc{{\bm{c}}}
\def\vg{{\bm{g}}}
\def\vu{{\bm{u}}}
\def\vv{{\bm{v}}}
\def\vw{{\bm{w}}}
\def\vx{{\bm{x}}}
\def\vz{{\bm{z}}}
\DeclareMathAlphabet{\mathsfit}{\encodingdefault}{\sfdefault}{m}{sl}
\SetMathAlphabet{\mathsfit}{bold}{\encodingdefault}{\sfdefault}{bx}{n}
\def\gG{{\mathcal{G}}}
\def\gO{{\mathcal{O}}}
\DeclareMathOperator*{\argmin}{arg\,min}
\newtheorem{theorem}{Theorem}
\newtheorem{definition}{Definition}
\title{MoreauPruner: Robust Pruning of Large Language Models against Weight Perturbations}
\author{
Zixiao~Wang,
Jingwei~Zhang,
Wenqian~Zhao,
Farzan~Farnia,
Bei~Yu,
\\
The Chinese University of Hong Kong\\
\texttt{\{zxwang22, jwzhang22, wqzhao, farnia, byu\}@cse.cuhk.edu.hk} \\
} 
\date{}
\newcommand{\minisection}[1]{\vspace{.1in}\noindent{\textbf{#1}}}
\begin{document}

\maketitle
 
\begin{abstract}
     Few-shot gradient methods have been extensively utilized in existing model pruning methods, where the model weights are regarded as static values and the effects of potential weight perturbations are not considered. However, the widely used large language models (LLMs) have several billion model parameters, which could increase the fragility of few-shot gradient pruning. In this work, we experimentally show that one-shot gradient pruning algorithms could lead to unstable results under perturbations to model weights. And the minor error of switching between data formats bfloat16 and float16 could result in drastically different outcomes. To address such instabilities, we leverage optimization analysis and propose an LLM structural pruning method, called MoreauPruner, with provable robustness against weight perturbations. In MoreauPruner, the model weight importance is estimated based on the neural network's Moreau envelope, which can be flexibly combined with $\ell_1$-norm regularization techniques to induce the sparsity required in the pruning task. We extensively evaluate the MoreauPruner algorithm on several well-known LLMs, including LLaMA-7B, LLaMA-13B, LLaMA3-8B, and Vicuna-7B. Our numerical results suggest the robustness of MoreauPruner against weight perturbations, and indicate the MoreauPruner's successful accuracy-based scores in comparison to several existing pruning methods. We have released the code in \url{https://github.com/ShiningSord/MoreauPruner}.
\end{abstract}
\section{Introduction}
\label{sec:intro}

In the rapidly evolving field of Natural Language Processing (NLP), transformer-based Large Language Models (LLMs) such as GPTs \cite{dettmers2022gpt3} and LLaMAs \cite{touvron2023llama,llama3modelcard} have become foundational technologies, driving significant advances across various tasks. These models excel in understanding and generating human language due to their scalable architecture, which allows performance to improve with an increase in parameters. However, deploying these large models poses significant challenges due to their substantial computational and memory demands. To address these challenges, considerable research has been directed toward model pruning \cite{han2015learning,wen2016learning,ma2023llmpruner,zhang2023pruning}, a technique aimed at reducing model size while maintaining or enhancing model performance.

While effective in accelerating LLMs for efficient deployment, existing pruning methods generally treat the parameters of a pre-trained language model as fixed, neglecting potential perturbations in the weights. These perturbations can originate from various sources, including quantization errors during transitions between precision levels and alterations from parameter-efficient fine-tuning (PEFT) technologies that modify a small subset of parameters. 
When there is a need to prune those slightly modified models, we may expect that the pruning results getting from modified models similar to those of the original models. For example, existing LLMs are usually trained with the weight format bfloat16 (BF16) and deployed with the weight format float16 (FP16). As both BF16 and FP16 utilize 16-bit to represent a floating point, the negligible transition error will not affect inference results in most cases. Considering that the basic idea of pruning is removing unnecessary weights and keeping the essential weights, it is straightforward to believe that the models pruned from BF16 and FP16 will be close to each other. However, current gradient-dependent pruning methods \cite{ma2023llmpruner,zhang2023pruning,lecun1989optimal,hassibi1992second} utilize gradient to indicate the importance of weight elements while gradient is known to be sensitive to such weight perturbations, leading to significant variations in pruning outcomes, as depicted in \Cref{fig:motivation}. Such inconsistency in pruned outcomes could be anti-intuitive, and a robust pruning algorithm against weight perturbation may further enhance the performance of the compressed models.  

\begin{figure}[tb!]
    \centering
    \includegraphics[width=0.92\linewidth]{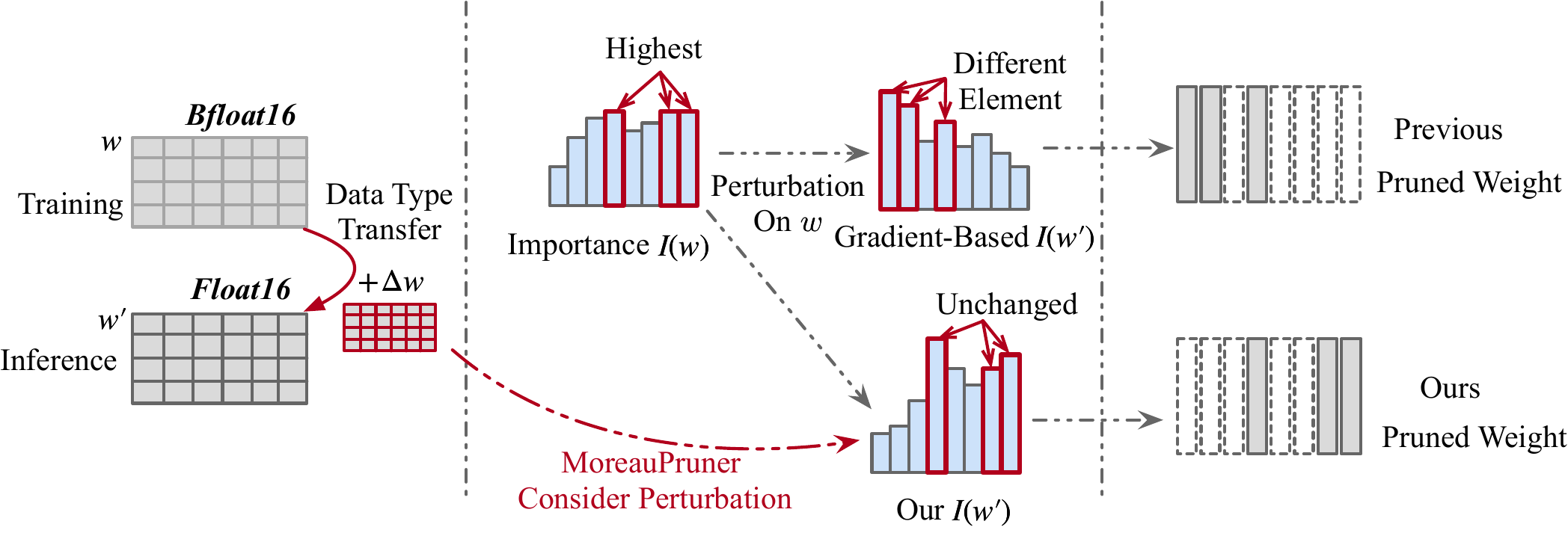}
    \caption{While gradient-based pruning methods are sensitive to weight perturbation, the proposed MoreauPruner gives a robust estimation of weight importance.}
    \label{fig:motivation}
\end{figure}

This paper introduces MoreauPruner, a novel robust structural pruning algorithm for LLMs, designed to mitigate the effects of weight perturbations while preserving model performance. MoreauPruner utilizes the gradient of the loss function's Moreau envelope \cite{moreau1965proximite,  zhang2023moreaugrad,t2020personalized}, a well-established optimization tool for function smoothing, to reduce weight sensitivity to perturbations during the pruning process. We show that the gradient of the Moreau envelope remains stable within the neighborhood of given weight in parameter space. This stability enables MoreauPruner to generate robustness pruning result against weight perturbations, with any norm-bounded perturbation resulting in only a bounded change of the gradient. Additionally, by incorporating an $\ell_1$-group-norm-based regularization penalty, MoreauPruner promotes group-level sparsity in the gradient, which is suitable for structural pruning to facilitate real-life acceleration on hardware platforms. Our empirical results suggest that MoreauPruner improves the robustness of pruning outcomes against weight perturbations and achieves state-of-the-art post-pruning model performance among baseline LLM pruning methods.

Our contributions through this work are threefold:
\begin{itemize}[leftmargin=*]
    \item We emphasize the importance of considering weight perturbations in pruning algorithms, an aspect previously neglected in the literature. This work is among the first to tackle the robustness of pruning algorithms to such perturbations.
    \item We introduce MoreauPruner, a structural pruning algorithm that offers provable robustness to weight perturbations, leveraging the Moreau envelope to ensure the smoothness and stability of the pruning process.
    \item Through extensive experimentation with widely-used large language models such as LLaMA-7B, LLaMA-13B, LLaMA3-8B, and Vicuna-7B, we demonstrate that MoreauPruner achieves a state-of-the-art performance in both robustness to weight perturbation and overall performance of compressed models.
\end{itemize}

\section{Related Work}

\subsection{Efficient Large Language Models}

Large Language Models (LLMs) \cite{touvron2023llama,achiam2023gpt,vicuna2023,llama3modelcard} have achieved remarkable performance by following the scaling laws \cite{kaplan2020scaling}. However, deploying LLMs can be challenging due to high inference costs in resource-limited scenarios. Various methods have been proposed to reduce model size, including knowledge distillation \cite{hinton2015distilling,sanh2019distilbert, sun2019patient, sun2020contrastive}, which involves transferring the knowledge from the original model to a smaller one; model quantization \cite{dettmers2022gpt3, xiao2023smoothquant,yao2022zeroquant,zafrir2019q8bert}, which reduces the bit length of the model weights; and model pruning \cite{han2015learning,frankle2018lottery,fang2023depgraph,park2023accurate}, which involves removing non-essential weights to speed up inference. This work primarily focuses on pruning LLMs \cite{xia2023sheared,ma2023llmpruner,bair2023adaptive,xu2024random,ashkboos2024slicegpt}, where gradients are particularly sensitive to weight perturbations due to the large scale of the model.

\subsection{Pruning Criteria}

To determine which weights to prune during the pruning phase, the importance of each weight is assessed using various criteria. Several studies \cite{sun2023simple,li2018optimization,han2015learning, elesedy2020lottery} adopt a magnitude-based criterion, retaining weights with larger magnitudes post-pruning. Recent approaches \cite{sun2023simple} also consider activation values to evaluate weight importance. Some prevalent criteria are based on Taylor Expansion approximation \cite{lecun1989optimal,ma2023llmpruner,yu2022width,hassibi1993optimal, hassibi1992second}, utilizing differential information (zero-th, first, and second order) to estimate output changes if weights are removed. Notably, \cite{zhang2023pruning} highlights that, in LLMs, gradients can be efficiently approximated using low-rank methods \cite{hu2021lora} when the direct computation of the gradient is too costly. Nevertheless, gradients can be highly sensitive to weight modifications, rendering gradient-based pruning criteria susceptible to variations in weight. In response, MoreauPruner offers proven robustness against any norm-bounded weight perturbation, maintaining model performance.

\section{Preliminaries}
In this section, we provide a review of prior Taylor-expansion-based structural pruning methods, along with the notation and definitions used in the paper.

\subsection{Notation and Definitions}

Let $\mathcal{D} = \{\vx_i\}_{i=1}^N$ denotes a text dataset with N samples.
$f(\vw,\vx)$ is the next token prediction loss on sample $\vx$ with a parameterized language model and its weight $\vw \in \mathbb{R}^d$. Then the expectation of loss on dataset $\mathcal{D}$ is \begin{equation}
    L(\vw, \mathcal{D}) = \frac{1}{N}\sum_{i=1}^Nf(\vw,\vx_i).
    \label{eq:loss}
\end{equation}
The $\ell_p$-norm of an input vector $\rvv$ is represented as $\|\rvv\|_p$. Furthermore, we use notation $\|\rvv\|_{p,q}$ to denote the $\ell_{p,q}$-group-norm of $\rvv$ deﬁned in the following equation for given variable subsets $S_1,...,S_t\subset\{1,...,d\}$:
\begin{equation}
    \|\rvv\|_{p,q} = \bigl\|\left[\|\rvv_{S_{1}}\|_{p},...,\|\rvv_{S_{t}}\|_{p}\right]\bigr\|_{q},
\end{equation}
which means $\|\rvv\|_{p,q}$ is the $\ell_q$-norm of a vector containing the $\ell_p$-norms of the subvectors of $\rvv$ characterized by index subsets $S_1,...,S_t\subset\{1,...,d\}$.

\subsection{Estimating Importance Score via Taylor Expansion}

Recent pruning method usually estimate the impact of removing a weight element $ w^{(k)}$ via Taylor Expansion,
\begin{equation}
\begin{aligned}
I( w^{(k)}) &= \|L(\vw,\mathcal{D}) - L( w^{(k)}=0,\mathcal{D})\| \\
&=|\frac{\partial L(\vw,\mathcal{D})}{\partial w^{(k)}} w^{(k)} - \frac{1}{2} w^{(k)}\rmH_{kk}(\rvx) w^{(k)} + \gO(| w^{(k)}|^3)|. \nonumber
\end{aligned}
\end{equation}
In the above equations, $L( w^{(k)}=0,\mathcal{D})$ denotes masking out the parameter $\vw$ in the neural network. Hessian matrix $\rmH$ is approximated by a diagonal one and $\rmH_{kk}$ is the $k$-th element on the diagonal. In some of previous pruning methods, the first-term is typically neglected since the the model is well-trained and converged on the training dataset, where $\frac{\partial L(\vw,\mathcal{D})}{\partial w^{(k)}} \approx 0$. However, a recent LLM pruning work\cite{ma2023llmpruner} point out the calibration dataset used in pruning is out of the training data and $\frac{\partial L(\vw,\mathcal{D})}{\partial w^{(k)}} \neq 0$. Given that the heavy computational cost of the Hessian matrix is unacceptable for LLMs, unlike small models, the importance score of parameter $w^{(k)}$ can be approximated using the first term in the Taylor Expansion,
\begin{equation}
    I( w^{(k)}) = |\frac{\partial L(\vw,\mathcal{D})}{\partial w^{(k)}} w^{(k)}|.
    \label{eq:is}
\end{equation}

To achieve real-time acceleration on hardware, structural pruning algorithms remove weight elements in group, \textit{i.e.}, all elements in a channel, blocks or heads. The importance score of such a structure can be easy obtained by summarizing the importance of elements,
\begin{equation}
    I(\vw_i) = \sum_kI(\vw_i^{(k)}),
    \label{eq:structure}
\end{equation}
where $\vw_i$ is the weight vector in a structure.

Existing pruning algorithms tend to pruning those weights with smaller importance scores. However, as we mentioned in \Cref{sec:intro}, the simple gradient in \Cref{eq:is} is sensitive to weight perturbations, which further leads to an unstable pruning result. Motivated by this fact, we have designed a robust pruning criterion in MoreauPruner and detailed it in the next section.

\subsection{Dependency-aware Structural Pruning}

Previous works\cite{fang2023depgraph,ma2023llmpruner} suggest that structural pruning should consider dependency among structures. Here, a weight group $\gG=\{\vw_i\}^M_{i=1}$ represents a collection of coupled structures, where $M$ is the number of structures in one group, and $\vw_i$ denotes the weight for each structure. The group can be effiently detected by \cite{fang2023depgraph}. And the importance score of the group $\gG$ is then estimated as follows:
\begin{equation}
I(\gG) = \mathop{Agg}\limits_{i=1}^{M}I(\vw_i),
\label{eq:group}
\end{equation}
where $\mathop{Agg}$ is a customized aggregation function chosen from options like Summation, Production, Max, etc. After assessing the importance of each group, groups with lower importance are pruned to achieve a pre-determined pruning ratio.

\section{MoreauPruner}

In this section, we introduce the proposed pruning method, MoreauPruner. We start by detailing the proposed perturbation-robust pruning criteria. In the second subsection, we introduce the two versions of MoreauPruner and how they work.

\subsection{Robustifying Gradient via Moreau Envelope}
Here we leverage the notion of Moreau envelope from the convex optimization literature to propose an optimization-based approach to robust gradient-based pruning.
The considered robust gradient follows Moreau-Yosida regularization, based on which the Moreau envelope of a neural network's parameters is defined as follows.
\begin{definition}
    Consider function $g:\mathbb{R}^d \rightarrow \mathbb{R}$ and regularization parameter $\rho>0$. The Moreau envelope of $g$ at input weight $\vw$, $g^\rho:\mathbb{R}^d\rightarrow \mathbb{R}$ is defined to be
    \begin{equation}
        g^\rho(\vw)\, :=\, \inf_{\Tilde{\vw}} g^\rho(\Tilde{\vw}) + \frac{1}{2\rho}\bigl\Vert\Tilde{\vw} - \vw\bigr\Vert^2_2.
    \end{equation}
\end{definition}
Following \cite{zhang2023moreaugrad}, we utilize the gradient of the Moreau envelope as a robust evaluation of the local sensitivity of the loss function sensitivity to altering the model weights.
\begin{definition}
    {Given the input weight $\vw$ and regularization parameter $\rho>0$, we define MoreauGrad as the Moreau envelope $g^\rho$'s gradient $\mathrm{MG}^\rho[g]:\mathbb{R}^d\rightarrow\mathbb{R}^d$:
    \begin{equation}
        \mathrm{MG}^\rho[g](\vw) := \nabla g^\rho(\vw). \label{eq:mg}
    \end{equation}}
\end{definition}

To analyze the gradient of Moreau envelope, we first discuss the optimization-based smoothing enforced by the Moreau envelope. Note that the Moreau envelope is known as an optimization tool to turn non-smooth convex functions (e.g. $\ell_1$-norm) into smooth functions, where the smoothness is usually regarding the input variable $\vx$. Here in the pruning case, we discuss the smoothness regarding the function parameters $\vw$ and extend the result to the weakly-convex functions which also apply to non-convex functions.

\begin{theorem}\label{Thm: MoreauGrad}
    Suppose that the parameterized function $g(\vw):\mathbb{R}^d \rightarrow \mathbb{R}$ is $\beta$-Lipschitz, i.e. it satisfies $\bigl\vert g(\vw) - g(\mathbf{v})\bigr\vert \le \beta\Vert \vw- \mathbf{v} \Vert_2 $ for every $\mathbf{v,w}\in\mathbb{R}^d$.
    Consider the Gaussian-smoothed $g_\sigma(\mathbf{w})= \mathbb{E}_{\mathbf{u}\sim \mathcal{N}(\mathbf{0},\sigma^2 I)}\bigl[g(\vw+\mathbf{u})\bigr]$.
     Then, for every $0<\rho<\frac{\sigma}{\beta}$, the following robustness guarantee will hold of the Moreau envelope of the Gaussian-smoothed $g_\sigma^\rho(\vw)$
        $$\| \mathrm{MG}^\rho[g_\sigma](\vw_1) - \mathrm{MG}^\rho[g_\sigma](\vw_2)\|_2 \leq \frac{\sigma}{\min\{\sigma\rho, \sigma-\rho\beta\}}\|\vw_1-\vw_2\|_2.$$
\end{theorem}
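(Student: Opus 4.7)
The plan is to (i) upgrade the Lipschitzness of $g$ into smoothness of the Gaussian convolution $g_\sigma$, and then (ii) apply Moreau-envelope calculus for weakly-convex functions to bound the Lipschitz constant of $\mathrm{MG}^\rho[g_\sigma]$.

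First, I would show that $g_\sigma$ has an $L$-Lipschitz gradient with $L:=\beta/\sigma$. Starting from Stein's identity $\nabla g_\sigma(\vw)=\sigma^{-2}\mathbb{E}_{\vu}[g(\vw+\vu)\vu]$, differentiating a second time under the expectation (valid by dominated convergence together with Rademacher's theorem, which yields $\|\nabla g\|_2\le\beta$ a.e.) gives $\nabla^{2}g_\sigma(\vw)=\sigma^{-2}\mathbb{E}_{\vu}[\nabla g(\vw+\vu)\vu^{\top}]$. Bounding the resulting operator norm with Cauchy--Schwarz and the identity $\mathbb{E}[(\va^{\top}\vu)^{2}]=\sigma^{2}$ yields $-LI\preceq\nabla^{2}g_\sigma\preceq LI$, so $g_\sigma$ is simultaneously $L$-smooth and $L$-weakly-convex.

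Next, the hypothesis $\rho<\sigma/\beta=1/L$ makes the inner problem $F_\vw(\vy):=g_\sigma(\vy)+\tfrac{1}{2\rho}\|\vy-\vw\|_2^{2}$ strongly convex with modulus $1/\rho-L>0$, so its unique minimizer $\vy^{\star}(\vw)$ exists and the envelope theorem provides the two equivalent identities $\mathrm{MG}^\rho[g_\sigma](\vw)=\tfrac{1}{\rho}(\vw-\vy^{\star}(\vw))=\nabla g_\sigma(\vy^{\star}(\vw))$. Subtracting the first-order optimality conditions $\vw_i=\vy^{\star}(\vw_i)+\rho\nabla g_\sigma(\vy^{\star}(\vw_i))$ for two inputs, inner-producting with $\vy^{\star}(\vw_1)-\vy^{\star}(\vw_2)$, and invoking weak convexity yields the prox-Lipschitz bound $\|\vy^{\star}(\vw_1)-\vy^{\star}(\vw_2)\|_2\le(1-\rho L)^{-1}\|\vw_1-\vw_2\|_2$. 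Chaining this with the two identities for $\mathrm{MG}^\rho$ gives two complementary Lipschitz estimates: one of $1/(1-\rho L)=\sigma/(\sigma-\rho\beta)$ (via $\mathrm{MG}^\rho=\nabla g_\sigma\circ\vy^{\star}$) and one of $1/\rho$ (via $\mathrm{MG}^\rho=(I-\vy^{\star})/\rho$ plus the firm-nonexpansiveness inherited from the convex shift $g_\sigma+\tfrac{L}{2}\|\cdot\|^{2}$); taking the larger yields the claimed constant $\max\{1/\rho,\sigma/(\sigma-\rho\beta)\}=\sigma/\min\{\sigma\rho,\sigma-\rho\beta\}$.

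The most delicate point will be extracting the clean $1/(1-\rho L)$ factor in the second identity rather than the looser $L/(1-\rho L)$ that a direct chaining of $L$-smoothness with the prox-Lipschitz bound would produce; the way around this is to exploit the specific fixed-point equation $\nabla g_\sigma(\vy^{\star})=(\vw-\vy^{\star})/\rho$ directly, rather than treating $g_\sigma$ as a black-box $L$-smooth function. A secondary technicality is the measure-theoretic justification of differentiation under the expectation in Step~1, which is standard once one invokes Rademacher's theorem and dominated convergence.
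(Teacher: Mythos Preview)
Your Step~(i) is essentially the paper's: both use Stein's identity to show $g_\sigma$ is $L:=\beta/\sigma$-smooth and hence $L$-weakly convex. The paper bounds $\bigl|\vu^\top(\nabla g_\sigma(\vw)-\nabla g_\sigma(\vw'))\bigr|$ directly from the integral representation rather than going through the Hessian, but this is cosmetic.

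Step~(ii) is where you diverge, and where your sketch has a real gap. You claim two ``complementary Lipschitz estimates,'' $1/(1-\rho L)$ and $1/\rho$, and take the larger. Neither follows from the arguments you give. Chaining $\mathrm{MG}^\rho=\nabla g_\sigma\circ\vy^\star$ with the prox-Lipschitz bound $\|\vy^\star(\vw_1)-\vy^\star(\vw_2)\|\le(1-\rho L)^{-1}\|\vw_1-\vw_2\|$ yields $L/(1-\rho L)$, not $1/(1-\rho L)$; you flag this, but ``exploit the fixed-point equation directly'' is not a fix---the fixed-point equation is exactly what produced the $L/(1-\rho L)$ factor. For the $1/\rho$ bound, the prox of a weakly-convex function is \emph{not} firmly nonexpansive: rewriting $\vy^\star(\vw)=\mathrm{prox}_{\frac{\rho}{1-\rho L}\Theta}\bigl(\tfrac{\vw}{1-\rho L}\bigr)$ with $\Theta=g_\sigma+\tfrac{L}{2}\|\cdot\|^2$ convex only recovers $(1-\rho L)^{-1}$-Lipschitzness of $\vy^\star$, and the input rescaling destroys any nonexpansiveness of $I-\vy^\star$. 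Concretely, for $g_\sigma(\vy)=-\tfrac{\alpha}{2}\|\vy\|^2$ (which is both $\alpha$-smooth and $\alpha$-weakly convex) one computes $(I-\vy^\star)(\vw)=-\tfrac{\rho\alpha}{1-\rho\alpha}\vw$, which is not $1$-Lipschitz once $\rho\alpha>1/2$. Finally, if you did have two valid Lipschitz upper bounds, you would take the \emph{smaller}; the only way ``take the larger'' makes sense is if you are separately bounding the positive and negative parts of the Hessian spectrum, but your arguments are phrased as Lipschitz bounds, not one-sided curvature bounds.

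The paper sidesteps all of this with a Fenchel-duality rewrite: it shows
\[
\rho\, g_\sigma^\rho(\vw)=\tfrac{1}{2}\|\vw\|_2^2-\vc^*(\vw),
\]
where $\vc^*$ is the Fenchel conjugate of the $\tfrac{\sigma-\rho\beta}{\sigma}$-strongly convex function $\vc(\cdot)=\rho\Theta(\cdot)+\tfrac{\sigma-\rho\beta}{2\sigma}\|\cdot\|_2^2$. Since the conjugate of a $\mu$-strongly convex function is $1/\mu$-smooth and convex, $0\preceq\nabla^2\vc^*\preceq\tfrac{\sigma}{\sigma-\rho\beta}I$, so the Hessian of $\rho\,g_\sigma^\rho$ has eigenvalues in $[1-\tfrac{\sigma}{\sigma-\rho\beta},\,1]$; the Lipschitz constant of $\mathrm{MG}^\rho[g_\sigma]$ is then $\tfrac{1}{\rho}$ times the larger endpoint in absolute value. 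This delivers the two-sided spectral control in one stroke, which is exactly the piece your optimality-condition route is missing.
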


\begin{proof}
    We defer the proof of the above theorem into appendices due to the space limitation.
\end{proof}
We note that, as shown in \cite{zhang2023moreaugrad}, the above definition can be combined with sparsity-based norm penalties, such as $\ell_1$-norm $\Vert \cdot\Vert_1$ or $\ell_{2,1}$-group norm $\Vert \cdot\Vert_{2,1}$ norm. Here, we generalize \cite{zhang2023moreaugrad}'s definition of (group)sparse-Moreau envelop, and given a convex function $h:\mathbb{R}^d\rightarrow \mathbb{R}$ propose the following definition of $h$-Moreau envelope:  

\begin{definition}\label{def: h-moreaugrad}
    {Given convex function $h$, input weight $\vw$, and regularization parameter $\rho>0$, we define $h$-MoreauGrad of function $g$, denoted by $h\text{-}\mathrm{MG}^\rho[g](\mathbf{w})$, as $\frac{1}{\rho}\bigl(\mathbf{v}^*(\mathbf{w})- \mathbf{w}\bigr)$ where $\mathbf{v}^*(\mathbf{w})$ denotes the optimal solution to the following optimization problem:
    \begin{equation}
        \min_{\mathbf{v}\in\mathbb{R}^d}\: g(\mathbf{v}) + \frac{1}{2\rho}\bigl\Vert \mathbf{v} - \mathbf{w} \bigr\Vert_2^2 + h\bigl(\mathbf{v}- \mathbf{w}\bigr).
    \end{equation}}
\end{definition}
In our numerical analysis, we specifically focus on GroupSparse-MoreauGrad which is the $h$-MoreauGrad with the group-norm $h(\mathbf{v})=\eta \Vert \mathbf{v}\Vert_{2,1}$. $\eta$ is the sparsity parameter. Here, we extend the robustness guarantee of \Cref{Thm: MoreauGrad} to a general $h$-MoreauGrad.

\begin{theorem}\label{Thm: h-MoreauGrad}
    Consider the setting of \Cref{Thm: MoreauGrad} and suppose $h$ is a convex function. Then, for every $0<\rho<\frac{\sigma}{\beta}$, the following robustness guarantee will hold of the $h$-Moreau envelope of  $g_\sigma^\rho(\vw)$
        $$\|h\text{-}\mathrm{MG}^\rho[g_\sigma](\vw_1) - h\text{-}\mathrm{MG}^\rho[g_\sigma](\vw_2)\|_2 \leq \frac{\sigma}{\min\{\sigma\rho, \sigma-\rho\beta\}}\|\vw_1-\vw_2\|_2.$$
\end{theorem}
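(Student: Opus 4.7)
The plan is to adapt the argument of Theorem~\ref{Thm: MoreauGrad}. The only new ingredient in Definition~\ref{def: h-moreaugrad} relative to the standard Moreau envelope is a convex regularizer $h(\mathbf{v}-\mathbf{w})$ inside the inner minimization. Since $h$ is convex, it only increases the curvature of the inner objective, and its subdifferential is a monotone operator, so every inequality used in the $h=0$ case carries over with the $h$-contribution appearing as a harmless non-negative additive term.

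First I would reparameterize via $\mathbf{u} = \mathbf{v} - \mathbf{w}$, so that $h\text{-}\mathrm{MG}^\rho[g_\sigma](\mathbf{w}) = \mathbf{u}^*(\mathbf{w})/\rho$, where $\mathbf{u}^*(\mathbf{w})$ is the unique minimizer of
\[
G_{\mathbf{w}}(\mathbf{u}) \,:=\, g_\sigma(\mathbf{u}+\mathbf{w}) + \frac{1}{2\rho}\|\mathbf{u}\|_2^2 + h(\mathbf{u}).
\]
Since Gaussian convolution of a $\beta$-Lipschitz function is $(\beta/\sigma)$-smooth and hence $(\beta/\sigma)$-weakly convex, and since $h$ is convex, the objective $G_{\mathbf{w}}$ is $\mu$-strongly convex in $\mathbf{u}$ with $\mu = 1/\rho - \beta/\sigma = (\sigma-\rho\beta)/(\rho\sigma) > 0$ under the assumption $\rho < \sigma/\beta$, which guarantees uniqueness of $\mathbf{u}^*(\mathbf{w})$.

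Next I would write the first-order optimality condition
\[
-\nabla g_\sigma\bigl(\mathbf{u}^*(\mathbf{w}_i) + \mathbf{w}_i\bigr) - \tfrac{1}{\rho}\mathbf{u}^*(\mathbf{w}_i) \,\in\, \partial h\bigl(\mathbf{u}^*(\mathbf{w}_i)\bigr), \qquad i=1,2,
\]
subtract the two instances, and pair with $\mathbf{u}^*(\mathbf{w}_1)-\mathbf{u}^*(\mathbf{w}_2)$. Monotonicity of $\partial h$ makes the subgradient contribution non-negative (and therefore safely discarded in the forthcoming upper bound), while the $(\beta/\sigma)$-Lipschitzness of $\nabla g_\sigma$ controls the remaining cross term. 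The resulting scalar inequality is exactly the one driving the $h=0$ analysis in the proof of Theorem~\ref{Thm: MoreauGrad}; solving it yields a Lipschitz bound on $\mathbf{w}\mapsto\mathbf{u}^*(\mathbf{w})$ with the same constant as in Theorem~\ref{Thm: MoreauGrad}, and a final division by $\rho$ delivers the stated bound on $h\text{-}\mathrm{MG}^\rho[g_\sigma]$.

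The main obstacle will be producing exactly the constant $\sigma/\min\{\sigma\rho,\sigma-\rho\beta\}$, whose two branches correspond to complementary regimes. The $\sigma-\rho\beta$ branch falls out of the strong-convexity computation described above. The $\sigma\rho$ branch reflects the firm non-expansiveness of the quadratic prox step, which survives the addition of $h$ because subtracting a monotone-subdifferential term preserves non-expansiveness. Once the proof of Theorem~\ref{Thm: MoreauGrad} in the appendix is examined to confirm that both branches are produced by techniques preserved under the monotone-subdifferential modification, the same constant transfers verbatim to Theorem~\ref{Thm: h-MoreauGrad}.
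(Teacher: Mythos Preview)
Your route is genuinely different from the paper's. You subtract first-order optimality conditions at $\mathbf{w}_1,\mathbf{w}_2$, pair with $\mathbf{u}^*(\mathbf{w}_1)-\mathbf{u}^*(\mathbf{w}_2)$, and discard the $\partial h$-contribution by monotonicity; this yields a single scalar inequality whose solution gives the Lipschitz bound $\tfrac{\beta}{\sigma-\rho\beta}$ for $h\text{-}\mathrm{MG}^\rho[g_\sigma]$. The paper instead rewrites the optimality condition as the proximal fixed point $\boldsymbol{\delta}^* = \mathrm{prox}_{\rho h}\bigl(-\rho\nabla g_\sigma(\mathbf{w}+\boldsymbol{\delta}^*)\bigr)$, recasts the solution map as $\boldsymbol{\delta}^*(\mathbf{w}) = \bigl((\psi + \mathrm{prox}_{\rho h}\circ\rho\nabla g_\sigma)^{-1}-\psi\bigr)(\mathbf{w})$ with $\psi$ the identity, and then argues that $\psi + \mathrm{prox}_{\rho h}\circ\rho\nabla g_\sigma$ is $(1-\rho\beta/\sigma)$-strongly monotone; the two-branch $\max$ is read off from the two ends of the range of this monotone inverse. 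Two remarks on your plan. First, your assumption that the appendix proof of Theorem~\ref{Thm: MoreauGrad} proceeds through the same scalar inequality is incorrect: it goes through Fenchel duality, writing $\rho g_\sigma^\rho(\mathbf{w}) = \tfrac{1}{2}\|\mathbf{w}\|^2 - c^*(\mathbf{w})$ for a strongly convex $c$ and extracting smoothness from the Hessian range of the conjugate, so the ``transfers verbatim'' step you anticipate is not available. Second, your concern about recovering the $1/\rho$ branch is largely moot: the constant $\tfrac{\beta}{\sigma-\rho\beta}$ your inequality delivers is exactly the nontrivial branch the paper's own computation produces (the $\sigma$ in the numerator of the theorem's stated constant does not match the paper's arithmetic, which gives $\beta$ there), and the $1/\rho$ branch only enlarges the maximum, so your elementary argument in fact yields a bound at least as sharp as what the paper's proof establishes.
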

\begin{proof}
    We defer the proof of the above theorem into appendices due to the space limitation.
\end{proof}

\begin{algorithm}
\caption{MoreauPruner Algorithm}
\label{alg:MoreauPruner}
\begin{algorithmic}[1]
\REQUIRE samples $\vx$, network with parameter $f(\vw)$, regularization parameter $\rho$, group-sparsity $\eta$, noise std $\sigma$, stepsize $\gamma$, optimization length $T$
\STATE Initialize $\vw^{(0)} = \vw$
\FOR {$t = 0, \ldots, T$}
    \STATE draw noise vectors $\vz_1, \ldots, \vz_m \sim \mathcal{N}(0, \sigma^2 I_{d \times d})$
    \STATE compute $\vg_t =  \frac{1}{m}\sum_{i=1}^m\nabla f(\vw^{(t)}+\vz_i, \vx)$
    \STATE Update $\vw^{(t+1)} \leftarrow (1-\frac{\gamma}{\rho})\vw^{(t)} - \gamma(\vg_t - \frac{1}{\rho}\vw)$
    \IF {GroupSparse}
        \STATE Update $\vw^{(t+1)} \leftarrow \text{GroupSoftThreshold}_{\gamma \eta}(\vw^{(t+1)} - \vw) + \vw$
    \ENDIF
\ENDFOR
\STATE Compute Importance score $I(\vw) = \frac{1}{\rho}(\vw^{(T)} - \vw)\vw$
\STATE Prune network $f(\vw)$ according to $I(\vw)$
\STATE Finetune pruned network
\STATE \textbf{Return} finetuned network
\end{algorithmic}
\end{algorithm}

\subsection{Leveraging MoreauGrad for Robust Pruning}

\minisection{MoreauPruner.} With the defined MoreauGrad, we established a robust estimation on the influence of removing a weight element $w^{(k)}$ based on \Cref{eq:is},
\begin{equation}
    \mathrm{MG}^\rho\text{-}I(w^{(k)}) = |\mathrm{MG}^\rho[g](\vw)\odot\vw|^{(k)},
\end{equation}
where $g(\vw)$ is the exception of loss function $L(\vw, \mathcal{D})$ defined in \Cref{eq:loss}, $|\cdot|$ is an element-wise absolute value function and $\odot$ denotes Hadamard product. To compute, the optimal solution $\Tilde{\vw}^*_\rho(\vw)$ of the optimization problem of Moreau Envelope $g^\rho(\vw)$ can be obtained over a calibration dataset with the first-order gradient descent optimization method. Since the difference $\Tilde{\vw}^*_\rho(\vw)-\vw$ is aligned with $g^\rho$'s gradient\cite{moreau1965proximite,zhang2023moreaugrad}, we have,
\begin{equation}
    \nabla g^\rho(\vw) = -\frac{1}{\rho}(\Tilde{\vw}^*_\rho(\vw)-\vw).
\end{equation}

Then, the robust importance score of the structure $\vw_i$ and group $\mathcal{G}$ can be also estimated by \Cref{eq:structure} and \Cref{eq:group}. We denote the structural pruning method removing groups with smaller robust importance score as MoreauPruner.

\minisection{MoreauPruner-GS.} Similar to MoreauPruner, the robust estimation on the influence of removing parameter $\vw^k$ is,
\begin{equation}
    h\text{-}\mathrm{MG}^\rho\text{-}I(w^{(k)}) = |h\text{-}\mathrm{MG}^\rho[g_\sigma](\vw)\odot\vw|^{(k)},
\end{equation}
where the group sparsity is conducted at the channel level in our implementation, \textit{i.e.,} each variable subset in the $2,1$-group-norm is a channel in the model. To compute the GroupSparse-MoreauGrad, we utilize the proximal gradient descent algorithm as described in \Cref{alg:MoreauPruner}. Note that we apply the group-soft-thresholding function as the proximal operator for the $\ell_{2,1}$-norm function present in GroupSparse-MoreauGrad,
\[
\mathrm{GST}_\alpha(\vv)_{S_i} :=
\begin{cases} 
0 & \text{if } \|\vv_{S_i}\|_2 \leq \alpha \\
\left(1 - \frac{\alpha}{\|\vv_{S_i}\|_2}\right) \vv_{S_i} & \text{if } \|\vv_{S_i}\|_2 > \alpha.
\end{cases}
\]
Once the optimization of $h$-Moreau envelope ends, the GroupSparse-MoreauGrad can be calculated according to \Cref{def: h-moreaugrad}. We treat the method as MoreauPruner-GS to mark the group sparsity of importance score obtained during optimization.

\section{Numerical Results}

In this section, we conducted experiments on several famous LLMs to evaluate the proposed MoreauPruner's performance and support our theoretical claim. We also provided further insights in the discussion subsection on how and why MoreauPruner works well.

\begin{table}[t!]
    \centering
    \caption{Algorithms' robustness against weight perturbation. \textbf{Diff} denotes the absolute difference between weight formats, bfloat16 and float16. Rounding results in changes to the last digit.}
    \subfloat[WikiText2]{
     \resizebox{0.45\textwidth}{!}{
        \setlength{\tabcolsep}{4pt}
  \begin{tabular}{c|c|cccc}
    \toprule
    \multirow{2}{*}{Method}&\multirow{2}{*}{Format}& \multicolumn{4}{c}{Pruning Ratio} \\
    &&5\%&10\%&15\%&20\%\\
    \midrule
    \multirow{3}{*}{LLM-Pruner\citep{ma2023llmpruner}}&  BF16&13.80&17.73&32.60&95.82  \\
    &FP16&13.75 &17.79 &32.10 &96.57  \\
    &Diff($\downarrow$)&\textbf{0.05} &0.07 &0.51 &0.75  \\
    \midrule
    \multirow{3}{*}{MoreauPruner}&  BF16&13.89&17.42&31.05&91.79 \\
    &FP16&13.83 &17.45 &30.99 &91.79   \\
    &Diff($\downarrow$)&\textbf{0.05} &\textbf{0.03} &\textbf{0.06} &\textbf{0.00}  \\
    \bottomrule
  \end{tabular}
    }
    }
    \hfill 
    \subfloat[PTB]{
     \resizebox{0.45\textwidth}{!}{
        \setlength{\tabcolsep}{4pt}
  \begin{tabular}{c|c|cccc}
    \toprule
    \multirow{2}{*}{Method}&\multirow{2}{*}{Format}& \multicolumn{4}{c}{Pruning Ratio} \\
    &&5\%&10\%&15\%&20\%\\
    \midrule
    \multirow{3}{*}{LLM-Pruner\citep{ma2023llmpruner}}&  BF16&25.00 &32.10 &61.87 &202.86   \\
    &FP16&24.85 &32.16 &61.15 &210.12   \\
    &Diff($\downarrow$)&0.15 &\textbf{0.06} &0.72 &7.26   \\
    \midrule
    \multirow{3}{*}{MoreauPruner}&  BF16&25.00 &32.22 &60.43&176.24  \\
    &FP16&24.95 &32.29 &60.43 &174.19  \\
    &Diff($\downarrow$)&\textbf{0.05} &\textbf{0.06} &\textbf{0.00} &\textbf{2.05}  \\
    \bottomrule
  \end{tabular}
    }
    }
    \label{tab:ppl}
    
\end{table}

\begin{table}[t!]

  \centering
  \caption{Zero-shot performance of the Pruned LLaMA-7B. $^\dagger$ denotes results from \cite{ma2023llmpruner}, $^\ddagger$ denotes results from \cite{zhang2023pruning} and $^*$ is implemented according to open-source code. The \textbf{best} results is bold. The methods proposed in this paper are filled with \textcolor{green}{green}.}
  \resizebox{0.96\textwidth}{!}{
  \setlength{\tabcolsep}{4pt}
  \begin{tabular}{c|c|cc|cccccccc}
    \toprule
    Pruning Ratio & Method& WikiText2($\downarrow$) & PTB($\downarrow$) & BoolQ & PIQA & HellaSwag & WinoGrande & ARC-e & ARC-c & OBQA & Average \\
    \midrule
    Ratio = 0\% & LLaMA-7B$^\dagger$&  12.62 & 22.14 & 73.18& 78.35&	72.99&	67.01&	67.45&	41.38	&42.40&	63.25 \\
    \midrule
    \multirow{8}{*}{\shortstack[c]{ Ratio = 20\% \\ w/o tune }}&Magnitude$^\dagger$&582.41 &1022.17&59.66&58.00&37.04&52.41&33.12&28.58&29.80&42.65  \\
    & Random$^\dagger$ & 27.51 & 43.19 & 61.83 & 71.33 & 56.26 &54.46&57.07 & 32.85&35.00&52.69 \\

    &WANDA$^\ddagger$\citep{sun2023simple}& 22.12 &38.19 &\textbf{64.93} &70.14&58.12& 55.39 &56.63& 33.98 &35.43 &53.23\\
    &LLM-Pruner$^*$\citep{ma2023llmpruner}& 19.09&	34.23 &56.91 &	75.08 &	66.81 &	60.06 &	60.94 &	36.43 &	40.00 &	56.60 \\
    &LoRAPrune$^\ddagger$\citep{zhang2023pruning}&20.67 &34.12& 57.98 &75.11 &65.81 &59.90& 62.14 &34.59 &39.98 &56.50\\
    
    &\cellcolor{green!10}SmoothGrad  & 18.91	&34.30 &59.60	&75.14	&65.98&	61.01	&60.77&	37.12	&39.80	&57.06 \\
    
    &\cellcolor{green!10}MoreauPruner &\textbf{18.61} &	\textbf{32.92} & 55.44 &	\textbf{ 76.17} &	66.47 	&\textbf{63.61} &	\textbf{61.53} &	\textbf{37.80} &	40.60&\textbf{57.37}  \\
    
    &\cellcolor{green!10}MoreauPruner-GS&18.72 &	34.91 &	62.51 &	75.52 	&\textbf{68.29} &	62.75 &	54.88 &	36.35 	&\textbf{40.80} &	57.30  \\

    \midrule
    
    \multirow{6}{*}{\shortstack[c]{ Ratio = 20\% \\ w tune }}
    &WANDA$^\ddagger$\citep{sun2023simple}& 18.43& 33.16& 65.75 &74.70 &64.52 &59.35& 60.65& 36.26& 39.40 &57.23\\
    &LLM-Pruner$^*$\citep{ma2023llmpruner} &  17.62&	30.57&	65.78&	76.44&	68.67&	64.33	&63.26&	36.35&	41.00&	59.40 \\
    &LoRAPrune$^\ddagger$\citep{zhang2023pruning}&16.80& \textbf{28.75} &65.62& \textbf{79.31} &\textbf{70.00} &62.76 &\textbf{65.87}& 37.69 &39.14& 60.05 \\
    &\cellcolor{green!10}SmoothGrad  &17.45&	30.57&	66.48	&76.99&	68.64	&65.35	&63.68	&37.80&	41.00	&59.99 \\
    &\cellcolor{green!10}MoreauPruner&17.01 	&30.27 &	66.61 &	77.04 &	68.32 &	\textbf{65.59 }&65.57 	&\textbf{38.40} &	\textbf{41.20} &	60.39  \\
    &\cellcolor{green!10}MoreauPruner-GS&\textbf{16.65} &	30.69 &	\textbf{68.87 }&	77.26 	&69.81& 	65.04 &	63.64 	&38.23 &	40.60 &	\textbf{60.49  }\\

    \bottomrule
  \end{tabular}
  }
  \label{tab:l7b}
\end{table}

\subsection{Experimental Settings}

\minisection{Pre-trained Models} To demonstrate the versatility of MoreauPruner across different scales, we evaluate it on three open-source large language models: LLaMA-7B\cite{touvron2023llama}, LLaMA-13B\cite{touvron2023llama}, and Vicuna-7B\cite{vicuna2023}.

\minisection{Evaluation} Building on prior research \cite{zhang2023pruning, ma2023llmpruner, sun2023simple}, we assess our method using seven zero-shot classification tasks on datasets centered around common sense reasoning: BoolQ \cite{clark2019boolq}, PIQA \cite{bisk2020piqa}, HellaSwag \cite{zellers2019hellaswag}, WinoGrande \cite{sakaguchi2021winogrande}, ARC-easy \cite{clark2018think}, ARC-challenge \cite{clark2018think}, and OpenbookQA \cite{mihaylov2018can}. Consistent with \cite{eval-harness}, the model ranks options in multiple-choice tasks or generates answers for open-ended questions. Furthermore, we perform a zero-shot perplexity (PPL) analysis on WikiText2 \cite{merity2016pointer} and PTB \cite{marcus1993building} with 128-token segments, aligning our methodology with that of \cite{zhang2023pruning, ma2023llmpruner}.

\minisection{Implementation Details} Aligning with the protocols of the closely related gradient-based method \cite{ma2023llmpruner}, our model pruning utilizes a calibration set of ten randomly selected, 128-token truncated sentences from the Bookcorpus \cite{zhu2015aligning}. The gradient of the Moreau envelope is computed using this calibration set, with the optimization step length fixed at ten. The pruning process typically completes in approximately 30 minutes on CPUs. In the post-training phase, a refined version of the Alpaca\cite{alpaca} dataset comprising about 50,000 samples is employed, with training extending over two epochs and generally taking three hours on a single NVIDIA RTX 3090 Ti GPU for 7B models. Detailed hyper-parameter selections are available in the Appendices.

\minisection{Structural Pruning Baselines}
We compare MoreauPruner against two fundamental pruning techniques: \textit{Magnitude} and \textit{Random}. Magnitude pruning evaluates weight significance based on the magnitude of the weight matrix, whereas Random pruning indiscriminately removes weights. Additionally, we benchmark against three advanced alternatives: \textit{LLM-Pruner} \cite{ma2023llmpruner}, which uses a gradient-based metric to determine weight importance; \textit{LoraPrune} \cite{zhang2023pruning}, which utilizes a LoRA\cite{hu2021lora}-guided pruning criterion; and \textit{WANDA} \cite{sun2023simple}, designed for unstructured or semi-structured pruning but adaptable to other structural frameworks. 

We also introduce \textit{SmoothGrad}, a preliminary version of MoreauPruner that enhances network smoothness by applying Gaussian smoothing during the inference, as we explained in \Cref{Thm: MoreauGrad}. The importance scores are estimated with the smoothed gradient using \Cref{eq:is}, and the we still remove those parameter groups with lower importance scores. A thorough comparison of these methods is documented in \Cref{tab:baseline_compare}.

\begin{table}[t!]
  \centering
  
  \caption{Zero-shot performance of the compressed Vicuna-7B.}
  \resizebox{0.96\textwidth}{!}{
  \setlength{\tabcolsep}{4pt}
  \begin{tabular}{c|c|cc|cccccccc}
    \toprule
    Pruning Ratio & Method& WikiText2($\downarrow$) & PTB($\downarrow$) & BoolQ & PIQA & HellaSwag & WinoGrande & ARC-e & ARC-c & OBQA & Average \\
    \midrule
    Ratio = 0\% & Vicuna-7B$^\dagger$ & 16.11 &	61.39 &	76.54 &	77.20 &	70.70 &	67.25 &	65.15 	&41.30 &	40.80 &	62.71  \\
    \midrule
    \multirow{6}{*}{\shortstack[c]{ Ratio = 20\% \\ w/o tune }}&Magnitude$^\dagger$& 3539.98&5882.21&55.90&56.15&32.37&51.85&30.01&28.41&28.20&40.41\\
    & Random$^\dagger$ &34.63&112.44& 61.47&70.89&54.67&56.27&55.60&31.74&34.60&52.18\\
    
    &LLM-Pruner$^*$\citep{ma2023llmpruner}&25.74&\textbf{92.87}&\textbf{61.62}&74.76&63.76&56.20&63.22&36.69&37.00&56.18   \\
    &\cellcolor{green!10}SmoothGrad  &  25.99&\textbf{92.87}&60.73&74.97&63.75&54.22&64.90&37.03&37.60&56.17  \\
    &\cellcolor{green!10}MoreauPruner & \textbf{25.54}&94.34&56.82&\textbf{75.79}&64.73&56.35&\textbf{65.95}&\textbf{37.88}&\textbf{39.80}&\textbf{56.76} \\
    &\cellcolor{green!10}MoreauPruner-GS& 30.69&108.16&61.47&75.24&\textbf{66.56}&\textbf{61.72}&57.24&37.12&38.00&\textbf{56.76} \\

    \midrule
    
    \multirow{4}{*}{\shortstack[c]{ Ratio = 20\% \\ w tune }}
    & LLM-Pruner$^*$\citep{ma2023llmpruner} & 19.47&72.33&64.43&76.44&65.39&60.46&63.22&35.92&38.20&57.72  \\
    &\cellcolor{green!10}SmoothGrad  & 19.51&\textbf{72.05}&63.46&75.68&65.38&60.93&62.79&36.43&38.80&57.64  \\
    &\cellcolor{green!10}MoreauPruner& 19.66&73.47&63.15&76.77&65.96&60.85&65.74&37.12&\textbf{40.60}&58.60 \\
    &\cellcolor{green!10}MoreauPruner-GS&\textbf{19.13}&73.76&\textbf{65.41}&\textbf{76.99}&\textbf{68.17}&\textbf{65.27}&\textbf{66.37}&\textbf{38.23}&39.80&\textbf{60.03 }\\

    \bottomrule
  \end{tabular}
  }
  \label{tab:v7b}
 
\end{table}

\begin{table}[t!]
  \centering
  
  \caption{Zero-shot performance of the compressed LLaMA-13B.}
  \resizebox{0.96\textwidth}{!}{
  \setlength{\tabcolsep}{4pt}
  \begin{tabular}{c|c|cc|cccccccc}
    \toprule
    Pruning Ratio & Method& WikiText2($\downarrow$) & PTB($\downarrow$) & BoolQ & PIQA & HellaSwag & WinoGrande & ARC-e & ARC-c & OBQA & Average \\
    \midrule
    Ratio = 0\% & LLaMA-13B$^\dagger$&  11.58&44.54&68.47&78.89&76.24&70.09&74.58&44.54&42.00&64.97  \\
    \midrule
    \multirow{4}{*}{\shortstack[c]{ Ratio = 20\% \\ w/o tune }} 
    & LLM-Pruner$^*$\citep{ma2023llmpruner}& \textbf{16.43}&\textbf{59.96}&63.00&77.53&73.79&64.33&\textbf{69.07}&\textbf{40.96}&40.60&61.33  \\
    &\cellcolor{green!10}SmoothGrad  & 16.55&\textbf{59.96}&62.94&77.04&73.78&65.98&68.35&40.53&\textbf{41.40}&61.43 \\
    &\cellcolor{green!10}MoreauPruner &16.95&61.39&62.48&\textbf{77.64}&73.61&\textbf{66.38}&67.47&39.68&40.60&61.12   \\
    &\cellcolor{green!10}MoreauPruner-GS&17.11&61.39&\textbf{72.97}&77.53&\textbf{74.44}&64.09&66.08&40.44&\textbf{41.40}&\textbf{62.42 }\\

    \midrule
    
    \multirow{4}{*}{\shortstack[c]{ Ratio = 20\% \\ w tune }}
    & LLM-Pruner$^*$\citep{ma2023llmpruner} & 15.04&57.00&67.28&79.00&\textbf{75.13}&69.06&\textbf{71.68}&41.89&43.60&63.95 \\
    &\cellcolor{green!10}SmoothGrad   &\textbf{15.01}&\textbf{56.55}&66.39&79.05&74.95&69.46&71.17&42.75&43.40&63.88 \\
    &\cellcolor{green!10}MoreauPruner& 15.52&57.44&64.86&\textbf{79.22}&75.07&\textbf{70.48}&\textbf{71.68}&\textbf{43.60}&42.80&63.96  \\
    &\cellcolor{green!10}MoreauPruner-GS&15.28&57.67&\textbf{75.17}&78.24&74.77&68.19&70.12&43.09&\textbf{45.00}&\textbf{64.94} \\

    \bottomrule
  \end{tabular}
  }
  \label{tab:l13b}
\end{table}

\subsection{Robustness Against Weight Perturbation}

As we previously discussed, few-shot gradient-based pruning methods are significantly influenced by the changes of the gradient. Even minor differences between FP16 and BF16 can lead to markedly different pruning outcomes. In contrast, MoreauGrad is theoretically robust against norm-bounded weight perturbation. To validate this assertion, we adhered to the channel-wise pruning protocol established in prior research \cite{ma2023llmpruner}, removing a fixed ratio of channels based on their importance score. 

Considering that LLMs are usually trained on BF16 but inferred on FP16, our experiments were conducted on the LLaMA-7B model using both BF16 and FP16 weight bit formats. We standardized the calibration sample selection during pruning across different settings to ensure a fair comparison. Upon completing the pruning process, we performed a zero-shot perplexity (PPL) analysis using 128-token segments on the WikiText2 and PTB datasets and compared the discrepancies between FP16 and BF16. The findings are presented in \Cref{tab:ppl}.

The results indicate that, for MoreauPruner, the performance under different weight format is closer to each other, which indicates a better consistency of pruning outcomes. The result demonstrates the robustness of MoreauPruner against weight perturbations caused by different weight formats.

\subsection{Zero-shot Performance}

We have developed two variants of our method, named \textit{MoreauPruner} and \textit{MoreauPruner-GS}, according to whether the sparsity penalty is applied. These techniques were tested on three pretrained models: LLaMA-7B, Vicuna-7B, and LLaMA-13B, with their performance detailed in \Cref{tab:l7b,tab:v7b,tab:l13b}.

The evaluations indicate that MoreauPruner can effectively maintain the model performance. For example, with a 20\% reduction in parameters on LLaMA-7B, \textit{MoreauPruner} and \textit{MoreauPruner-GS} maintains 95.48\% and 95.64\% of the original performance with a quick post-training. 
For Vicuna-7B and LLaMA-13B, \textit{MoreauPruner-GS} surpassed \textit{MoreauPruner} by a notable margin, thanks to the structural sparsity introduced during optimization. On Vicuna-7B, \textit{MoreauPruner-GS} maintains 96.16\% of the original performance. For the larger model LLaMA-13B, We have noticed that the performance gap between the compressed and original models is closer than that of the 7B models. After a quick recovery, the zero-shot accuracy of the compressed with 80\% parameters is nearly equivalent to the original model's performance (64.94\% vs. 64.97\%). Such a phenomenon may indicate more redundant weights in the larger models. In other words, those huge LLMs ($\geq$13B) can be potentially inferred without a trade-off on performance.

\subsection{Further Discussion}
\label{sec:5.4}

In this subsection, we extended our experiment to identify how \textit{MoreauPruner} works. We also discussed that with more computational resource, how can \textit{MoreauPruner} be further improved.

\minisection{Effect of Function Smoothing} In our preliminary evaluations, we introduced \textit{SmoothGrad} to assess the impact of function smoothing. This approach often matches or exceeds the performance of gradient-based competitors. Notably, on the benchmark model LLaMA-7B, \textit{SmoothGrad} outperformed all baseline methods prior to finetuning. These findings suggest that gradient-based pruning methods could benefit from function smoothing, as it helps mitigate the excessive sharpness of certain parameters within the differential space.

\begin{table}[t!]
  \centering
  
  \caption{The effect of larger recovery set.}
  \resizebox{0.96\textwidth}{!}{
  \setlength{\tabcolsep}{4pt}
  \begin{tabular}{c|c|c|cccccccc}
    \toprule
    Pruning Ratio & Method& Recovery Set&BoolQ & PIQA & HellaSwag & WinoGrande & ARC-e & ARC-c & OBQA & Average \\
    \midrule
    Ratio = 0\% & LLaMA-7B$^\dagger$& N/A& 73.18& 78.35&	72.99&	67.01&	67.45&	41.38	&42.40&	63.25 \\
    \midrule
     \multirow{2}{*}{\shortstack[c]{ Ratio = 20\%}}&MoreauPruner-GS&50k\cite{alpaca}	&68.87 &	77.26 	&69.81& 	65.04 &	63.64 	&38.23 &	40.60 &	60.49  \\
        &MoreauPruner-GS&2.59M\cite{wu2023lamini}&76.97 &	76.82 &	68.51 	&66.30 &	70.88 	&41.89 &	40.80 &	63.17 \\

    \bottomrule
  \end{tabular}
  }
  \label{tab:largerecover}
\end{table}
\begin{table}[t!]
  \centering
  \caption{The performance of the MoreauPruner-GS on LLaMA-7B with different calibration set size.}
  \resizebox{0.480\textwidth}{!}{
  \setlength{\tabcolsep}{4pt}
\begin{tabular}{c|c|ccc}
    \toprule
    Pruning Ratio & Calibration& WikiText2($\downarrow$) & PTB($\downarrow$) & Average \\
    \midrule
    \multirow{2}{*}{\shortstack[c]{ Ratio = 20\% \\ w/o tune }}&10&18.72&34.91&57.30\\
    &1000&\textbf{18.50} 	&\textbf{32.16}& \textbf{58.12}\\
    
    \midrule
    \multirow{2}{*}{\shortstack[c]{ Ratio = 20\% \\ w tune }}&10&\textbf{16.65}&30.69&60.49\\
    &1000&16.95 	&\textbf{30.21}& \textbf{60.63}\\
    \bottomrule
\end{tabular}
}
\end{table}

\minisection{Larger Recovery Set.} In the primary section of the results, the recovery phase was conducted on Alpaca\cite{alpaca}, utilizing a dataset of 50,000 samples. To demonstrate the potential enhancement achieved by the pruned model, we carried out an experiment on a significantly larger dataset\cite{wu2023lamini}, consisting of 2.59 million samples. The findings, presented in \Cref{tab:largerecover}, reveal that the performance of the compressed model closely approximates that of the base model (63.17\% v.s. 63.25\%), respectively. These results further substantiate the hypothesis of the presence of redundant weights in LLMs.

\minisection{Larger Calibration Set.} We enlarge the size of calibration set utilized during pruning phase. We found a larger calibration set can efficiently improving the pruning quality. Estimating gradient importance on 1000 samples raise the average zero-shot accuracy from 57.30\% to 58.12\% and decrease PPL by 0.22 and 2.75 on WikiText2 and PTB. However, the difference on post-finetuning performance is shrinking, resulting as only 0.14\% difference on average accuracy. 

\minisection{Results on LLaMA3-8B.} We extend our result on LLaMA3-8B\cite{llama3modelcard}, a stronger foundational model that is pre-trained with more high-quality data compared with previous version. The result can be found in \Cref{tab:llama3}. MoreauPruner-GS still works well on stronger LLaMA3-8B without hyper-paramerter modification. We have noticed that the performance of pruned LLaMA3-8B drops more than that of LLaMA-7B. This may lead by the fact that the pretraining of LLaMA3-8B is more sufficient according to official report and there is less redundant model weight. However, the pruned LLaMA3-8B still beats the original LLaMA-7B by a noticeable margin (63.25\% vs. 65.37\%).

\minisection{Effect of Pruning Ratio} We explored the influence of varying pruning ratios as illustrated in \Cref{fig:ratio}. It is evident that our methods consistently work well across different pruning ratios. This stability underscores the robustness and effectiveness of our pruning strategies.

\minisection{Impact of Hyper-parameters.} The hyper-parameter $\eta$ controls the ratio of group-sparsity of MoreauPruner-GS during optimization. We conduct an ablation study on LLaMA-7B with 20\% sparsity to evaluate the impact of different hyper-parameter values $\eta$. The results illustrated in \Cref{fig:eta} give the average 0-shot accuracy after finetuning. According to the results, we choose $\eta$=5e-6 for all the experiments in this paper.

\begin{table}[t!]
  \centering
  
  \caption{Zero-shot performance of the compressed LLaMA3-8B.}
  \resizebox{0.96\textwidth}{!}{
  \setlength{\tabcolsep}{4pt}
  \begin{tabular}{c|c|cc|cccccccc}
    \toprule
    Pruning Ratio & Method& WikiText2($\downarrow$) & PTB($\downarrow$) & BoolQ & PIQA & HellaSwag & WinoGrande & ARC-e & ARC-c & OBQA & Average \\
    \midrule
    Ratio = 0\% & LLaMA-8B$^\dagger$&14.14 &	27.98&  81.35 	&80.79& 	79.17 &	72.53 &	80.09 &	53.41 &	45.00 &	70.33   \\
    \midrule
     \multirow{2}{*}{\shortstack[c]{ Ratio = 20\% \\ w/o tune }}&LLM-Pruner\cite{ma2023llmpruner}&25.74 &	45.69 	&67.55 	&74.97 &	63.33 	&67.80& 	62.29 &	35.49 &	36.60 &	58.29    \\
        &MoreauPruner-GS&\textbf{25.40 }&\textbf{	43.78 	}&\textbf{73.73 }&\textbf{	75.08 }&\textbf{	64.93 	}&\textbf{68.03 	}&\textbf{66.11 }&\textbf{	39.25 }&\textbf{	37.60 }&\textbf{	60.68}  \\
    \midrule
     \multirow{2}{*}{\shortstack[c]{ Ratio = 20\% \\ w tune }}&LLM-Pruner\cite{ma2023llmpruner}&23.71& 	42.01 	&\textbf{77.52 }&	77.69 	&71.75 	&67.96 	&71.63 &	42.24& 	40.00 	&64.11    \\
        &MoreauPruner-GS&\textbf{22.98 }&\textbf{	39.25 	}&76.57&\textbf{ 	78.67 }&\textbf{	73.17 
        }&\textbf{	69.14 	}&\textbf{74.49 	}&\textbf{43.77 }&\textbf{	41.80 }&\textbf{	65.37  } \\

    \bottomrule
  \end{tabular}
  }
  \label{tab:llama3}
\end{table}

\begin{figure}[t!]
    \centering
    
    \begin{minipage}[t]{0.480\textwidth}
    \centering
    \includegraphics[height=4.0cm]{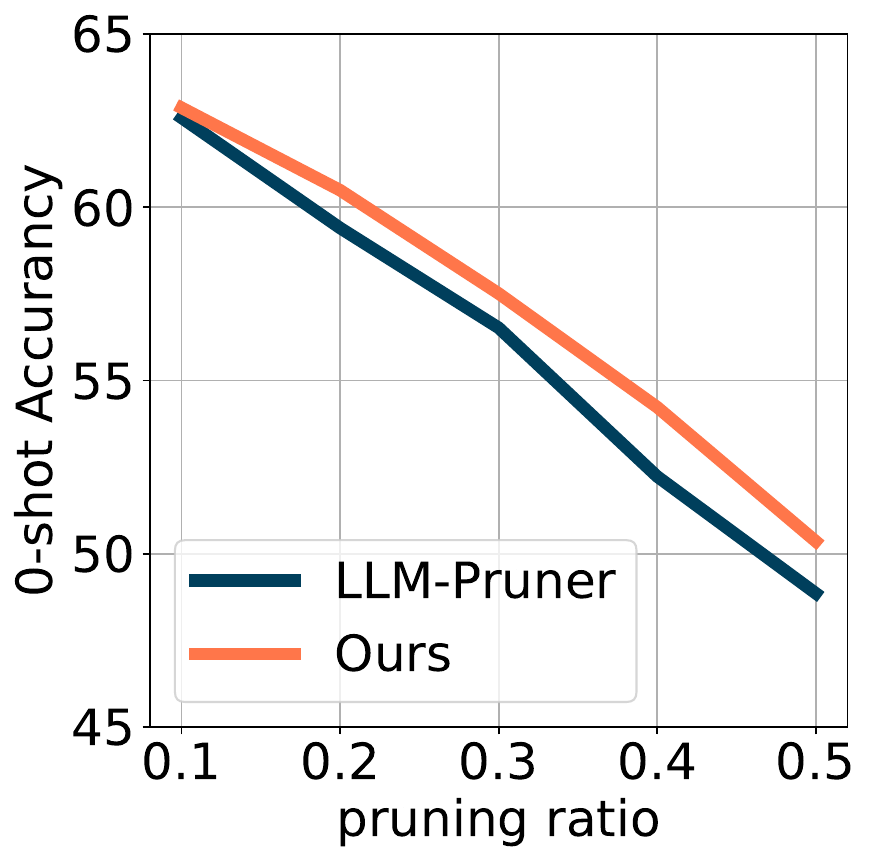}
    \caption{The effect of different pruning ratio.}
    \label{fig:ratio}
    \end{minipage}
    \hfill
    \begin{minipage}[t]{0.480\textwidth}
    \centering
    \includegraphics[height=4.0cm]{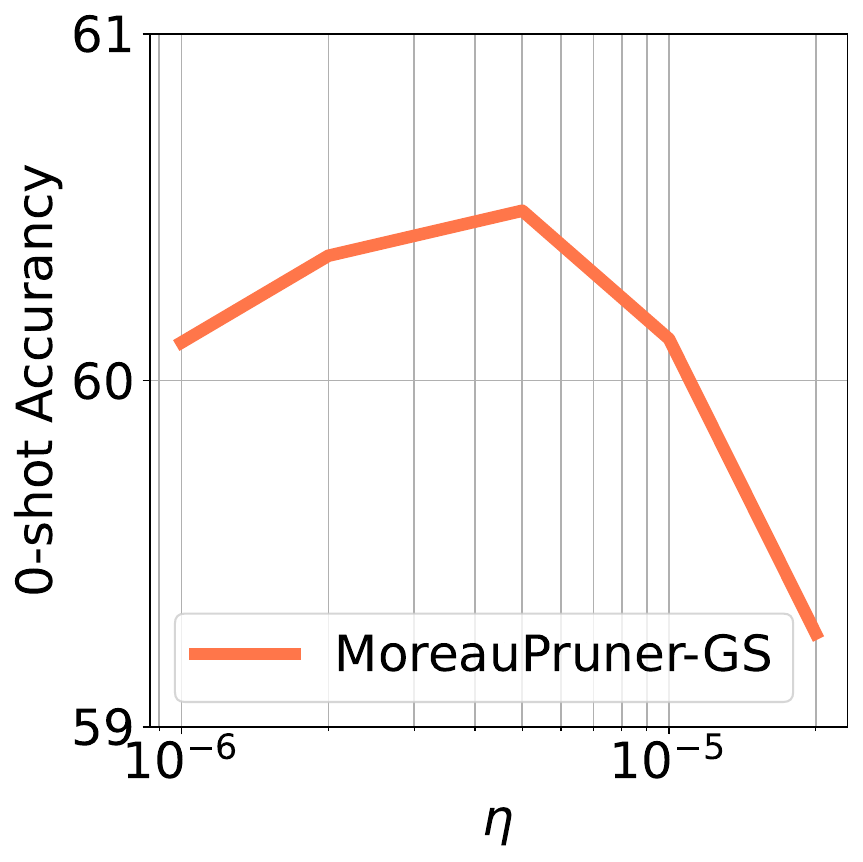}
    \caption{Ablation study on hyper-paramter $\eta$.}
    \label{fig:eta}
    \end{minipage}
    
\end{figure}

\section{Conclusion}

In this paper, we discussed how minor changes in model weights can lead to unstable pruning results for large language models (LLMs). To address this instability, we introduced MoreauPruner, a weight-perturbation structural pruning method. Our theoretical analysis demonstrates that MoreauPruner is robust to norm-bounded perturbations. Numerical experiments conducted on well-known LLMs, such as LLaMA-7B, LLaMA-13B, LLaMA-3-8B, and Vicuna-7B, suggest that MoreauPruner can efficiently compress LLMs while maintaining their performance. For future work, we propose combining structural pruning technology with other model compression methods to accelerate model inference and reduce computational costs.

\minisection{Limitations.} The authors acknowledge that the number of parameters utilized in the models for this paper only reach 13B due to limited hardware budget. The performance of MoreauPruner on extremely large-scale models (e.g., 30B, 70B, etc.) will be further explored once enough hardware resources are available.

\bibliography{neurips_2024}
\bibliographystyle{unsrt}
\appendix
\section{Proof}

\subsection{Proof of \Cref{Thm: MoreauGrad}}

As Theorem~\Cref{Thm: MoreauGrad} assumes a Lipschiz function $g$, 
we can apply the Stein's lemma \cite{landsman2013note} to show
\begin{align*}
&\nabla g_\sigma(\vw) \, =\, \mathbb{E}[\nabla g(\vw + \rmZ)] \,=\, \mathbb{E}[g(\vw+\rmZ)\frac{\rmZ}{\sigma^2}]
\end{align*}

Therefore, for every $\vw,\vw'\in\mathbb{R}^d$ and unit-$\ell_2$-norm vector $\Vert \vu \Vert_2=1$ we have the following

\begin{equation}
    \begin{aligned}
        \left|\vu^\top(\nabla g_\sigma(\vw)-\nabla g_\sigma(\vw'))\right| =& \left|\vu^\top(\mathbb{E}[\nabla g(\vw+\rmZ)] - \mathbb{E}[\nabla g(\vw'+\rmZ)])\right| \\
    =& \left|\vu^\top(\mathbb{E}\left[\frac{\rmZ}{\sigma^2}g(\vw+\rmZ)\right]-\mathbb{E}\left[\frac{\rmZ}{\sigma^2}g(\vw'+\rmZ)\right])\right| \\
    =& \left|\mathbb{E}\left[\frac{\vu^\top\rmZ}{\sigma^2}(g(\vw+\rmZ)-g(\vw'+\rmZ))\right]\right| \\
    \leq& \mathbb{E}\left[\frac{\left|\vu^\top\rmZ\right|}{\sigma^2}\left|g(\vw+\rmZ)-g(\vw'+\rmZ)\right|\right] \\
    \leq& \mathbb{E}\left[\frac{\left|\vu^\top\rmZ\right|}{\sigma^2}\beta\Vert\vw-\vw'\Vert_2\right] \\
    =&\frac{\beta\Vert\vw-\vw'\Vert_2}{\sigma}\mathbb{E}\left[\bigl\vert\frac{ \vu^\top\rmZ}{\sigma}\bigr\vert\right]  \\
    \leq& \frac{\beta\Vert\vw-\vw'\Vert_2}{\sigma}.
    \nonumber
    \end{aligned}
\end{equation}

In the above, note that $ \frac{\mathbf{u}^\top \mathbf{Z}}{\sigma}\sim \mathcal{N}(0,1)$. As a result, the gradient of $g_\sigma$ will be $\frac{\beta}{\sigma}$-Lipschitz, and $g_\sigma$ is $\frac{\beta}{\sigma}$-smooth, which means for every $\vw, \vw'$ we have,
$$
\left|g_\sigma(\vw') - \nabla g_\sigma(\vw)^\top(\vw'-\vw)\right|\leq\frac{\beta}{2\sigma}\Vert\vw-\vw'\Vert_2^2
$$

As a result,  
$ \Theta(\vw) = g_\sigma(\vw) + \frac{\beta}{2\sigma}\Vert\vw\Vert_2^2$ will be a convex function. 
Therefore, we can rewrite the definition of the Moreau envelope as
\begin{equation}
    \begin{aligned}
        g_\sigma^\rho(\vw) =& \min_{\Tilde{\vw}\in\mathbb{R}^d}\Theta(\Tilde{\vw}) -\frac{\beta}{2\sigma}\Vert\Tilde{\vw}\Vert_2^2 + \frac{1}{2\rho}\Vert\Tilde{\vw}-\vw\Vert_2^2 \\
        =& \min_{\Tilde{\vw}\in\mathbb{R}^d}\Theta(\Tilde{\vw}) +(\frac{1}{2\rho}-\frac{\beta}{2\sigma})\Vert\Tilde{\vw}\Vert_2^2 - \frac{1}{\rho}\vw^\top\Tilde{\vw} + \frac{1}{2\rho}\Vert\vw\Vert_2^2\\
        =& \frac{1}{2\rho}\Vert\vw\Vert_2^2 - \frac{1}{\rho}\max_{\Tilde{\vw}\in\mathbb{R}^d}\left\{ \vw^\top\Tilde{\vw} - \rho \Theta(\Tilde{\vw}) - \frac{\sigma-\rho\beta}{2\sigma}\Vert\Tilde{\vw}\Vert_2^2\right\}.\nonumber
    \end{aligned}
\end{equation}

Therefore, $\rho g_\sigma^\rho(\vw)$ is the subtraction of the Fenchel conjugate of $\vc(\vw) = \rho\Theta(\vw) + \frac{\sigma-\rho\beta}{2\sigma}\Vert\Tilde{\vw}\Vert_2^2$ from the 1-strongly-convex $\frac{1}{2}\Vert\vw\Vert_2^2$. Then, we apply the result that the Fenchel conjugate of a $\mu$-strongly convex function is $\frac{1}{\mu}$-smooth convex function in \cite{zhou2018fenchel}. Therefore, the following Fenchel conjugate 
$$
\vc^*(\vw):=\max_{\Tilde{\vw}\in\mathbb{R}^d}\left\{ \vw^\top\Tilde{\vw} - \rho\Theta(\Tilde{\vw}) - \frac{\sigma-\rho\beta}{2\sigma}\Vert\Tilde{\vw}\Vert_2^2\right\}
$$
is a $\frac{\sigma}{\sigma-\rho\beta}$-smooth convex function. Since, we subtract two convex functions from each other where the second one has a constant Hessian $I$, then the resulting function will be smooth of the following degree:
$$
\frac{1}{\rho}\mathrm{max}\left\{|\frac{\sigma}{\sigma-\rho\beta}-1|,|0-1|\right\} = \frac{\sigma}{\min\{\sigma\rho, \sigma-\rho\beta\}},
$$
which completes the proof of the theorem.

\subsection{Proof of \Cref{Thm: h-MoreauGrad}}

To prove \Cref{Thm: h-MoreauGrad}, we note that the additional $h$ is a convex function. 
Given the formulation of the $h$-Moreau envelope of $g_\sigma^\rho(\vw)$ and the assumption $0<\rho<\frac{\sigma}{\beta}$ in the theorem, we have
\begin{equation}
    \begin{aligned}
    g_{\sigma,h}^\rho(\vw):= &\min_{\Tilde{\vw}\in\mathbb{R}^d}g_\sigma(\Tilde{\vw}) + \frac{1}{2\rho}\Vert\Tilde{\vw}-\vw\Vert_2^2 + h(\Tilde{\vw}-\vw), \\
    =& \min_{\Tilde{\vw}\in\mathbb{R}^d}\Theta(\Tilde{\vw}) +(\frac{1}{2\rho}-\frac{\beta}{2\sigma})\Vert\Tilde{\vw}\Vert_2^2 - \frac{1}{\rho}\vw^\top\Tilde{\vw} + \frac{1}{2\rho}\Vert\vw\Vert_2^2 + h(\Tilde{\vw} - \vw),
        \nonumber
    \end{aligned}
\end{equation}
where $ \Theta(\vw) = g_\sigma(\vw) + \frac{\beta}{2\sigma}\Vert\vw\Vert_2^2$ is a convex function. Then the function $\phi:\mathbb{R}^d\rightarrow\mathbb{R}$ defined as
$$
\phi(\Tilde{\vw}) = (\frac{1}{2\rho}-\frac{\beta}{2\sigma})\Vert\Tilde{\vw}\Vert_2^2 - \frac{1}{\rho}\vw^\top\Tilde{\vw}
$$
is a $\frac{\sigma-\rho\beta}{\sigma\rho}$-strongly-convex function. As a result, $\Theta(\Tilde{\vw}) + \phi(\Tilde{\vw}) + h(\Tilde{\vw} - \vw)$ is strongly-convex function with strong-convexity degree $\frac{\sigma-\rho\beta}{\sigma\rho}$. Therefore, the optimization of $h$-Moreau envelope has a unique locally and globally optimal solution. 
we define the proximal operator of $h$ function as
$$
\mathrm{prox}_{h(\cdot)}(\vw):=\argmin_{\vw'\in\mathbb{R}^d} h(\vw') + \frac{1}{2}\Vert\vw'-\vw\Vert_2^2.
$$

Then since the objective function of $h$-Moreau envelope consists of the following two convex functions (w.r.t. $\boldsymbol{\delta}:=\Tilde{\vw}-\vw$) $t_\vw(\boldsymbol{\delta}):= g_\sigma(\vw+\boldsymbol{\delta}) + \frac{1}{2\rho}\Vert\boldsymbol{\delta}\Vert_2^2$ and $h(\boldsymbol{\delta})$, the optimal solution $\boldsymbol{\delta}^*$ will satisfy the following equation with $\gamma>0$:
$$
\boldsymbol{\delta}^* = \mathrm{prox}_{\gamma h(\cdot)}\bigl(\boldsymbol{\delta}^*-\gamma\nabla t_{\vw}(\boldsymbol{\delta}^*)\bigr) \overset{\gamma=\rho}{=} \mathrm{prox}_{\rho h(\cdot)}\bigl(-\rho\nabla g_\sigma(\vw+\boldsymbol{\delta}^*)\bigr).
$$
The above implies that, if we use $\psi$ to denote the identity map we will get:
$$
\boldsymbol{\delta}^*(\vw) = \left((\psi + \mathrm{prox}_{\rho h(\cdot)}\circ \rho \nabla g_\sigma)^{-1} - \psi\right)(\vw).
$$
Note that in the above $\psi + \mathrm{prox}_{\rho h(\cdot)}\circ \rho \nabla g_\sigma$ will be a $(1-\frac{\rho\beta}{\sigma})$-monotone operator, where we call $t:\mathbb{R}^d\rightarrow\mathbb{R}^d$ $\tau$-monotone if for every $\vw, \vv\in\mathbb{R}^d$:
$$
(\vv-\vw)^\top\bigl(t(\vv)-t(\vw)\bigr) \geq \tau\Vert\vv-\vw\Vert_2^2.
$$
The monotonicity arises because the gradient of a $\lambda$-weakly convex function is -$\lambda$-monotone, and the proximal operator is known to be 1-monotone. Hence, $\boldsymbol{\delta}^*(\vw)$ will be a Lipschitz function with the following Lipschitz constant (note that $(\psi + \mathrm{prox}_{\rho h(\cdot)}\circ \rho \nabla g_\sigma)^{-1}$ is a monotone function with a degree between 0 and $\frac{\sigma}{\sigma-\rho\beta}$):
$$
\mathrm{max}\left\{|\frac{\sigma}{\sigma - \rho\beta} -1|,|0-1|\right\} = \mathrm{max}\left\{\frac{\rho\beta}{\sigma-\rho\beta},1\right\}.
$$

Therefore, for any given convex function $h$, the $h$-MoreauGrad
$$
h\text{-}\mathrm{MG}^\rho[g](\mathbf{w}) := \frac{1}{\rho}\boldsymbol{\delta}^*(\vw)
$$
will be a Lipschitz function with the constant $\frac{\sigma}{\mathrm{min}\left\{\sigma\rho, \sigma - \rho\beta\right\}}$.
Then the proof the theorem is finished.

\section{Experiment Details \& Extra Results}
\subsection{A detailed comparison of methods}

\begin{table}[th]
  
  \centering
  
  \caption{A detailed comparison between methods.}
  \renewcommand{\arraystretch}{1.2}
  \resizebox{0.98\textwidth}{!}{
  \setlength{\tabcolsep}{4pt}
  \begin{tabular}{cccccc}
    \toprule
    Method&Pruning Criterion& Calibration Set (Size) & Post-Training Set (Size) & Iteratively Pruning &Smoothness\\
    \midrule
    Random & random &N/A&N/A&\ding{55} &\ding{55}\\
    Magnitude& $\|\vw_i\|_2$&N/A&N/A&\ding{55}&\ding{55}\\
    
    WANDA\citep{sun2023simple}& $|\vw^k|\|x_i\|_2$ &C4(0.128k)& C4(20k)&\ding{55}&\ding{55}\\
    
    LoRAPrune\citep{zhang2023pruning}& $(\text{LoRA-guided } \frac{\partial L(\vw,\mathcal{D})}{\partial\vw^k})\vw^k$&C4(20k)&C4(20k)&\ding{51}&\ding{55}\\
    LLM-Pruner\citep{ma2023llmpruner}& $\frac{\partial L(\vw,\mathcal{D})}{\partial\vw^k}\vw^k$& Bookcorpus(0.01k) &  Alpaca(50k)&\ding{55}&\ding{55}\\
    \cellcolor{green!10}SmoothGrad & $\mathbb{E}\frac{\partial L(\vw+\vz,\mathcal{D})}{\partial\vw^k}\vw^k$ & Bookcorpus(0.01k) &  Alpaca(50k)&\ding{55}&\ding{51}\\
    \cellcolor{green!10}MoreauPruner & $\text{MG}_{\rho}\left(L(\vw,\mathcal{D})\right)\vw^k$ & Bookcorpus(0.01k) &  Alpaca(50k)&\ding{55}&\ding{51}\\
    \cellcolor{green!10}MoreauPruner-GS&$\text{GS-MG}_{\rho,\eta}\left(L(\vw,\mathcal{D})\right)\vw^k$& Bookcorpus(0.01k) &  Alpaca(50k)&\ding{55}&\ding{51}\\
    \bottomrule
  \end{tabular}
  }
  \label{tab:baseline_compare}
  \renewcommand{\arraystretch}{1}
\end{table}

We list the comparison on the experiment setting utilized in our baselines, which can be found in \Cref{tab:baseline_compare}. 

\subsection{Parameters Choosing}

In the pruning stage, for SmoothGrad, we randomly pick a batch from BookCorpus \cite{zhu2015aligning} with ten 128-token truncated sentences. We then pass the batch to the model 100 times. We utilized a element-wised Gaussian smoothing, \textit{i.e.,} for weight parameter $w^{(k)}$, the intensity of Gaussian is $\sigma=0.05|w^{(k)}|$, where $|\cdot|$ denotes the absolute value function. The smooth gradient is empirically calculated by averaging the gradients of each forward pass.

For both MoreauPruner and MoreauPruner-GS, we also apply the element-wise Gaussian smoothing to the model weights during the optimization of the gradient of the Moreau Envelope, as SmoothGrad does. The hyper-parameter $\rho$ is set to 0.05 for MoreauPruner and 0.2 for MoreauPruner-GS. The stepsize $\gamma$ used in the optimization of the gradient of the Moreau Envelope is 1e-3 for MoreauPruner and 2e-4 for MoreauPruner-GS. The hyper-parameter $\eta$ is set to 5e-6 as explained in the main text. We conducted a parameter search on LLaMA-7B to find suitable hyper-parameters.

In the fine-tuning stage, we use the protocol from previous work \cite{ma2023llmpruner}. The batch size is 64. The learning rate is 1e-4 for Alpaca \cite{alpaca} and 5e-5 for Lamini \cite{wu2023lamini}. The training length is two epochs for Alpaca and three epochs for Lamini.

\subsection{Extra Results}

\minisection{Compared with Scratch Training.} We compare our MoreauPruner-GS with StableLM-3B\footnote{https://huggingface.co/stabilityai/stablelm-tuned-alpha-3b} with a similar parameter size. With MoreauPruner-GS, We prune LLaMA-7B and get a compact model with 3.45B parameters. Both models are finetuned on Alpaca\cite{alpaca} dataset for a fair comparison. The result can be found in \Cref{tab:scratch}. MoreauPruner-GS sometimes achieves better results compared with LLMs that are trained from scratch. 
We also recognize that the pruned model may not consistently surpass other models with similar scale, due to the significant disparity in the size of the training corpus.

\begin{table}[t!]
  \centering
  \caption{Comparison between scratch-training and LLaMA-3B obtained by MoreauPruner-GS}
  \label{tab:scratch}
  \setlength{\tabcolsep}{4pt}
  \resizebox{0.92\textwidth}{!}{
  
  \begin{tabular}{c|c|cccccccc}
    \toprule
    Method& \#Param  & BoolQ & PIQA & HellaSwag & WinoGrande & ARC-e & ARC-c & OBQA & Average \\
    \midrule
    StableLM-3B$^\dagger$ &3.6B& 48.78&\textbf{69.48}&44.52&54.62&50.93&25.17&27.40&45.84 \\
 
    MoreauPruner-GS& 3.5B&\textbf{62.26}&68.39&\textbf{49.58}&\textbf{55.72}&\textbf{50.97}&\textbf{30.20}&\textbf{35.40}&\textbf{50.36}  \\

    \bottomrule
  \end{tabular}
  }
\end{table}

\end{document}